\def\BibTeX{{\rm B\kern-.05em{\sc i\kern-.025em b}\kern-.08em
    T\kern-.1667em\lower.7ex\hbox{E}\kern-.125emX}}
\newcommand{\reals}{\mathbb{R}}
\newcommand{\E}{\mathbb{E}}
\def\adl@drawiv#1#2#3{%
        \hskip.5\tabcolsep
        \xleaders#3{#2.5\@tempdimb #1{0.8}#2.5\@tempdimb}%
                #2\z@ plus1fil minus1fil\relax
        \hskip.5\tabcolsep}
\newcommand{\cdashlinelr}[1]{%
  \noalign{\vskip\aboverulesep
           \global\let\@dashdrawstore\adl@draw
           \global\let\adl@draw\adl@drawiv}
  \cdashline{#1}
  \noalign{\global\let\adl@draw\@dashdrawstore
           \vskip\belowrulesep}}
\renewcommand{\eqref}[1]{Eq.~(\ref{eq:#1})}
\newcommand{\secref}[1]{Section \ref{sec:#1}}
\newcommand{\lemref}[1]{Lemma \ref{lem:#1}}
\newcommand{\appref}[1]{Appendix \ref{ap:#1}}
\newcommand{\citet}[1]{\cite{#1}}
\newcommand{\citep}[1]{\cite{#1}}
\newtheorem{definition}{Definition}
\newtheorem{lemma}{Lemma}
\begin{document}

\title{Fast Single-Class Classification and \\the Principle of Logit Separation}

\author{

\IEEEauthorblockN{Gil Keren}
\IEEEauthorblockA{
\textit{ZD.B Chair of Embedded Intelligence} \\
\textit{for Health Care and Wellbeing} \\
\textit{University of Augsburg, Germany}\\
gil.keren@informatik.uni-augsburg.de}

\and

\IEEEauthorblockN{Sivan Sabato}
\IEEEauthorblockA{
\textit{Department of Computer Science } \\
\textit{Ben-Gurion University of the Negev}\\
Beer-Sheva, Israel \\}

\and

\IEEEauthorblockN{Bj\"orn Schuller}
\IEEEauthorblockA{
\textit{ZD.B Chair of Embedded Intelligence} \\
\textit{for Health Care and Wellbeing} \\
\textit{University of Augsburg, Germany}\\
\textit{GLAM -- Group on Language, } \\
\textit{Audio \& Music} \\
\textit{Imperial College London, UK}}

}

\newcommand{\lsplong}{Principle of Logit Separation} 
\maketitle

\newcommand{\tabresone}{
\begin{table*}[htbp]
\caption{Results on Single Logit classification (SLC), using the different loss functions. Lower values are better. In almost all cases, loss functions that are aligned with the \lsplong\ (under the dashed line) yield a mean relative improvement of at least 20\% in SLC accuracy measures, and sometimes considerably more. 
}
\begin{center}

  \begin{tabular}{clcccc}
    \toprule
	Dataset & Method & 1-AUPRC & 1-Precision@0.9 & 1-Precision@0.99  \\
    \midrule[1pt]
	\multirow{8}{*}{\parbox{3cm}{\centering MNIST}}  
 & CE & 0.008 & 0.005 &0.203 \\
 & max-margin & 0.012 & 0.018 &0.262 \\
  \cdashlinelr{2-6} 
 & self-norm 		& 0.002 				& 0.001 		& \textbf{0.021} 	 \\
 & NCE 				& 0.002 				& 0.002 		& \textbf{0.021}  \\
 & binary CE 		& 0.002 				& \textbf{0.000} 		&0.037 	\\
 & batch CE 			& \textbf{0.001} 	& 0.001 		&0.022  \\
 & batch max-margin 	& 0.002 				& 0.001 		&0.034  \\
 \cmidrule{2-6} 
 & Mean relative improvement & 82.0\%	& 91.3\% & 88.4\% \\
   \midrule[1pt]
   \multirow{8}{*}{\parbox{3cm}{\centering SVHN}}
 & CE & 0.023 & 0.028 &0.545 \\
 & max-margin & 0.021 & 0.025 &0.532 \\
   \cdashlinelr{2-6}
 & self-norm & \textbf{0.015} & 0.014 &0.298 \\
 & NCE & 0.021 & 0.017 &0.320  \\
 & binary CE & \textbf{0.015} & 0.016 &0.312 \\
 & batch CE & \textbf{0.015} & \textbf{0.013} & \textbf{0.280} \\
 & batch max-margin & 0.018 & 0.020 &0.384 \\
  \cmidrule{2-6} 
 & Mean relative improvement & 23.4\%	&	39.6\%	&	40.8\% \\
\midrule[1pt]
   \multirow{8}{*}{CIFAR-10} 
 & CE & 0.109 & 0.326 &0.703  \\
 & max-margin & 0.094 & 0.285 &0.705 \\
   \cdashlinelr{2-6} 
 & self-norm & 0.073 & 0.204 &0.599  \\
 & NCE & 0.081 & 0.214 &\textbf{0.594} \\
 & binary CE & \textbf{0.070} & 0.210 &0.607 \\
 & batch CE & 0.072 & \textbf{0.202} &0.602  \\
 & batch max-margin & 0.075 & 0.226 &0.636 \\
  \cmidrule{2-6} 
 & Mean relative improvement & 26.9\%	&	30.9\%	&	13.6\% \\
\midrule[1pt]
   \multirow{8}{*}{CIFAR-100} 
 & CE & 0.484 & 0.866 &0.974  \\
 & max-margin & 0.490 & 0.893 &0.977 \\
   \cdashlinelr{2-6} 
 & self-norm & 0.378 & 0.807 &0.970  \\
 & NCE & 0.383 & \textbf{0.795} &0.964  \\
 & binary CE & 0.426 & 0.870 &0.978  \\
 & batch CE & \textbf{0.371} & \textbf{0.795} &\textbf{0.961}  \\
 & batch max-margin & 0.468 & 0.903 &0.983  \\
  \cmidrule{2-6} 
 & Mean relative improvement & 16.8\%	& 5.2\%	&	0.5\% \\
\midrule[1pt]
 \multirow{3}{*}{\parbox{3cm}{\centering Imagenet\\ (1000 classes)\\($6\cdot 10^6$ iterations)}}
 & CE 	& 0.366 & 0.739 & 0.932  \\
   \cdashlinelr{2-6} 
 & batch CE 		& \textbf{0.245} & \textbf{0.563}  & \textbf{0.865}  \\
  \cmidrule{2-6} 
 & Relative improvement & 33.1\%	& 23.8\%	&  7.2\% \\
    \bottomrule\\
  \end{tabular}
\label{tab:resultsone}
\end{center}
\end{table*}
}

\newcommand{\tabrestwo}{
\begin{table*}[htbp]
\caption{Comparing binary classification with a single logit (SLC) vs.~all logits. Lower values are better. PoLS-aligned SLC methods are above the dashed line. Results are comparable, thus SLC does not cause any degradation in binary classification accuracy, compared to the case where all logits are computed. 
 }
\begin{center}
  \begin{tabular}{clcccc}
    \toprule
	Dataset & Method & 1-AUPRC & 1-Precision@0.9 & 1-Precision@0.99 \\
    \midrule[1pt]
	\multirow{2}{*}{\parbox{3cm}{\centering MNIST}}  
 & Mean PoLS methods & 0.002 & 0.001 & 0.027 \\
  & batch CE 			& 0.001 	& 0.001 		&0.022  \\
  \cdashlinelr{2-5}
 & CE with all logits & 0.001 & 0.000 &0.020 \\
   \midrule[1pt]
   \multirow{2}{*}{\parbox{3cm}{\centering SVHN}}
 & Mean PoLS methods & 0.017 & 0.016 & 0.319 \\
  & batch CE & 0.015 & 0.013 & 0.280 \\
    \cdashlinelr{2-5}
 & CE with all logits & 0.015 & 0.016 &0.313 \\
\midrule[1pt]
   \multirow{2}{*}{CIFAR-10} 
 & Mean PoLS methods & 0.074 & 0.211 & 0.608 \\
  & batch CE & 0.072 & 0.202 &0.602  \\
    \cdashlinelr{2-5}
 & CE with all logits & 0.074 & 0.214 &0.648\\
\midrule[1pt]
   \multirow{2}{*}{CIFAR-100} 
 & Mean PoLS methods & 0.405 & 0.834 & 0.971 \\
  & batch CE & 0.371 & 0.795 & 0.961  \\
    \cdashlinelr{2-5}
 & CE with all logits & 0.380 & 0.801 &0.973 \\
\midrule[1pt]
 \multirow{2}{*}{\parbox{3cm}{\centering Imagenet}}
 & batch CE		& 0.245 & 0.563 & 0.865 \\
   \cdashlinelr{2-5}
 & CE with all logits 	& 0.223 & 0.566 & 0.872 \\
    \bottomrule\\
  \end{tabular}
\label{tab:resultstwo}
\end{center}
\end{table*}
}

\newcommand{\tabresthree}{
\begin{table}[htbp]
\caption{Speedup experiment results. When the number of examples is large, SLC results in a considerable speedup.
}
\begin{center}
  \begin{tabular}{clcccc}
    \toprule
	Architecture & Classes & Inference Time [s] & SLC Speedup \\
    \midrule[1pt]
\multirow{5}{*}{\parbox{2cm}{\centering Alexnet}}
 & $1$ (SLC) & $3.6 \cdot 10^{-3}$ & --- \\
 & $2^{10}$ & $3.7 \cdot 10^{-3}$ & x1.04 \\
 & $2^{14}$ & $4.0 \cdot 10^{-3}$ & x1.14 \\
 & $2^{16}$ & $5.7 \cdot 10^{-3}$ & x1.59 \\
 & $2^{18}$ & $20.2 \cdot 10^{-3}$ & x5.68 \\
 & $2^{18.5}$ & $76.0 \cdot 10^{-3}$ & x21.38 \\
\midrule[1pt]
\multirow{5}{*}{\parbox{2cm}{\centering VGG-16}}
 & $1$ (SLC) & $9.4 \cdot 10^{-3}$ & --- \\
 & $2^{10}$ & $9.6 \cdot 10^{-3}$ & x1.02 \\
 & $2^{14}$ & $9.9 \cdot 10^{-3}$ & x1.05 \\
 & $2^{16}$ & $11.4 \cdot 10^{-3}$ & x1.20 \\
 & $2^{18}$ & $26.4 \cdot 10^{-3}$ & x2.79 \\
 & $2^{18.5}$ & $80.5 \cdot 10^{-3}$ & x8.52 \\
\midrule[1pt]
\multirow{5}{*}{\parbox{2cm}{\centering Inception-v3}}
 & $1$ (SLC) & $6.0 \cdot 10^{-3}$ & --- \\
 & $2^{10}$ & $6.2 \cdot 10^{-3}$ & x1.03 \\
 & $2^{14}$ & $6.5 \cdot 10^{-3}$ & x1.09 \\
 & $2^{16}$ & $7.3 \cdot 10^{-3}$ & x1.22 \\
 & $2^{18}$ & $18.7 \cdot 10^{-3}$ & x3.11 \\
 & $2^{18.5}$ & $76.6 \cdot 10^{-3}$ & x12.75 \\
\midrule[1pt]
\multirow{5}{*}{\parbox{2cm}{\centering Resnet-50}}
 & $1$ (SLC) & $6.1 \cdot 10^{-3}$ & --- \\
 & $2^{10}$ & $6.4 \cdot 10^{-3}$ & x1.04 \\
 & $2^{14}$ & $6.6 \cdot 10^{-3}$ & x1.08 \\
 & $2^{16}$ & $7.4 \cdot 10^{-3}$ & x1.20 \\
 & $2^{18}$ & $19.1 \cdot 10^{-3}$ & x3.11 \\
 & $2^{18.5}$ & $78.0 \cdot 10^{-3}$ & x12.69 \\
\midrule[1pt]
\multirow{5}{*}{\parbox{2cm}{\centering Resnet-101}}
 & $1$ (SLC) & $8.0 \cdot 10^{-3}$ & --- \\
 & $2^{10}$ & $8.2 \cdot 10^{-3}$ & x1.03 \\
 & $2^{14}$ & $8.3 \cdot 10^{-3}$ & x1.04 \\
 & $2^{16}$ & $9.4 \cdot 10^{-3}$ & x1.19 \\
 & $2^{18}$ & $23.5 \cdot 10^{-3}$ & x2.95 \\
 & $2^{18.5}$ & $80.4 \cdot 10^{-3}$ & x10.10 \\
    \bottomrule\\
  \end{tabular}
\label{tab:resultsthree}
\end{center}
\end{table}
}

\begin{abstract}
We consider neural network training, in applications in which there are many possible classes, but at test-time, the task is a binary classification task of determining whether the given example belongs to a specific class, where the class of interest can be different each time the classifier is applied. For instance, this is the case for real-time image search. We define the \emph{Single Logit Classification} (SLC) task: training the network so that at test-time, it would be possible to accurately identify whether the example belongs to a given class in a computationally efficient manner, based only on the output logit for this class. 
We propose a natural principle, the \emph{\lsplong}, as a guideline for choosing and designing losses suitable for the SLC. 
We show that the cross-entropy loss function is not aligned with the \lsplong. In contrast, there are known loss functions, as well as novel batch loss functions that we propose, which are aligned with this principle. In total, we study seven loss functions. 
Our experiments show that indeed in almost all cases, losses that are aligned with the \lsplong\ obtain at least 20\% relative accuracy improvement in the SLC task compared to losses that are not aligned with it, and sometimes considerably more. Furthermore, we show that fast SLC does not cause any drop in binary classification accuracy, compared to standard classification in which all logits are computed, and yields a speedup which grows with the number of classes. For instance, we demonstrate a 10x speedup when the number of classes is 400,000. 
\texttt{Tensorflow} code for optimizing the new batch losses is publicly
available at \url{https://github.com/cruvadom/Logit_Separation}. 
\end{abstract}


\section{Introduction} \label{sec:intro}
With the advent of Big Data, classifiers can learn fine-grained distinctions, and are used for classification in settings with very large numbers of classes. Datasets with up to hundreds of thousands of classes are already in use in the industry \citep{DBLP:conf/cvpr/DengDSLL009,DBLP:journals/corr/PartalasKBAPGAA15}, and such classification tasks have been studied in several works (e.g., \citet{DBLP:conf/icml/WestonMY13,DBLP:journals/jmlr/GuptaBW14}). 
Classification with a large number of classes appears naturally in vision, in language modeling and in machine translation \citep{DBLP:journals/corr/BahdanauCB14,DBLP:journals/corr/JozefowiczVSSW16,DBLP:conf/cvpr/DeanRSSVY13}.

When using neural network classifiers, one implication of a large number of classes is a high computational burden at test-time. Indeed, in standard neural networks using a softmax layer and the cross-entropy loss, the computation needed for finding the logits of the classes (the pre-normalized outputs of the top network layer) is linear in the number of classes \citep{DBLP:journals/corr/GraveJCGJ16}, and can be prohibitively slow for high-load systems, such as search engines and real-time machine-translation systems.

In many applications, the task at test-time is not full classification of each example into one of the many possible classes. Instead, the task, each time the trained classifier is used, is to identify whether the current example should be classified into one of a small subset of the possible classes, or even a single class. This class can be different every time the classifier is used. 
Consider for example the case of real-time image search \citep{DBLP:conf/iros/MaturanaS15,DBLP:conf/cvpr/RedmonDGF16} from a live feed from multiple cameras. When the user queries for images of object A, the classifier has to process a large number of images, and decide whether each image contains an instance of object A or not. The classifier is then activated for the second time, this time with a query to find images of object B. New images are now processed by the classifier, to determine which ones contain an instance of object B.

The setting described above has various applications in which the object to detect might change every time the model is used. For instance, consider cameras installed on autonomous cars, security cameras for detecting objects of interest, or live face recognition, where a different person is to be identified each time, based on the given query. 

In the setting that we consider, while every use of the classifier at test-time tests for a single class (or a small number of classes), the classifier itself must support queries on \emph{any} of the classes, since it will be used again and again, each time with a different class as a query.
As the number of classes may be large, it is not reasonable to train a separate model for every possible class that might be queried at test time. Instead, our goal is to have a single model which supports \emph{all} possible class-queries. 

For this type of applications, one would ideally like to have a test-time computation that does not depend on the total number of possible classes. A natural approach is to calculate only the logit of the class of interest, and use this value alone to infer whether this is the true class of the example. However, the logit of a single class might only be meaningful in comparison to logits of other classes, in which case, unless the other logits are also computed, it cannot be used to successfully determine whether the test example belongs to the class of interest. We name the goal of inferring class correctness from the logit of that class alone \emph{Single Logit Classification} (SLC). 
Note that SLC is a binary classification task, stressing the fact that only one logit is computed. 
In Figure \ref{fig:time} we demonstrate the speedup yielded by SLC, compared to binary classification in the method which computes all logits and uses them for normalization, as the number of classes increases. For instance, computing only a single logit yields a 10x speedup in evaluation time when there are 400,000 possible classes. The speedup increases with the number of possible classes. See \secref{experiments} for full details on the experimental setting and the resulting speedup.

\begin{figure}[htbp]
\centerline{\includegraphics[width=\columnwidth]{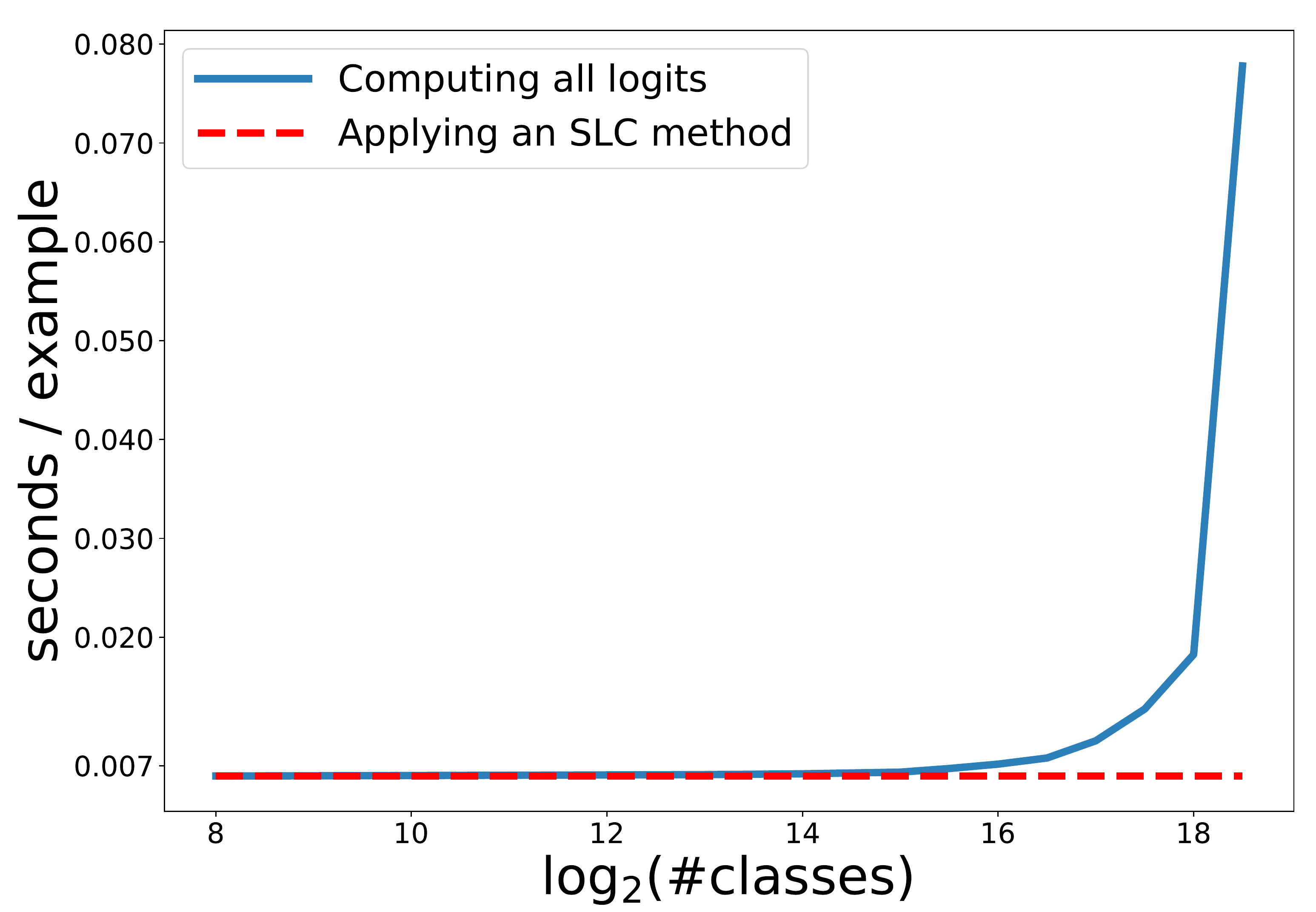}}
\caption{Computation time of the logits from network inputs, using an inception-V3 image classification architecture \citep{DBLP:conf/cvpr/SzegedyVISW16}, where the topmost layer is replaced according to the appropriate number of classes. When applying SLC, computation cost is fixed regardless of the number of classes, which can lead to considerable speedups when the number of classes is large. 
See \secref{experiments} for full details regarding speedups and the experimental setting.}
\label{fig:time}
\end{figure}

In this work, we show that when using the standard cross-entropy loss for training, the value of a single logit is not informative enough for determining whether this is indeed the true class for the example. In other words, the cross-entropy loss yields poor accuracy in the SLC task. Further, we identify a simple principle that we name the \emph{\lsplong}. This principle captures an essential property that a loss function must have in order to yield good accuracy in the SLC task. The principle states that to succeed in the SLC task, the training objective should optimize for the following property: 
\begin{quote}
\emph{The value of any logit that belongs to the correct class of any training example should be larger than the value of any logit that belongs to a wrong class of any (same or other) training example.}
\end{quote}

\begin{figure*}[htbp]
\centerline{\includegraphics[height=1.8in, keepaspectratio]{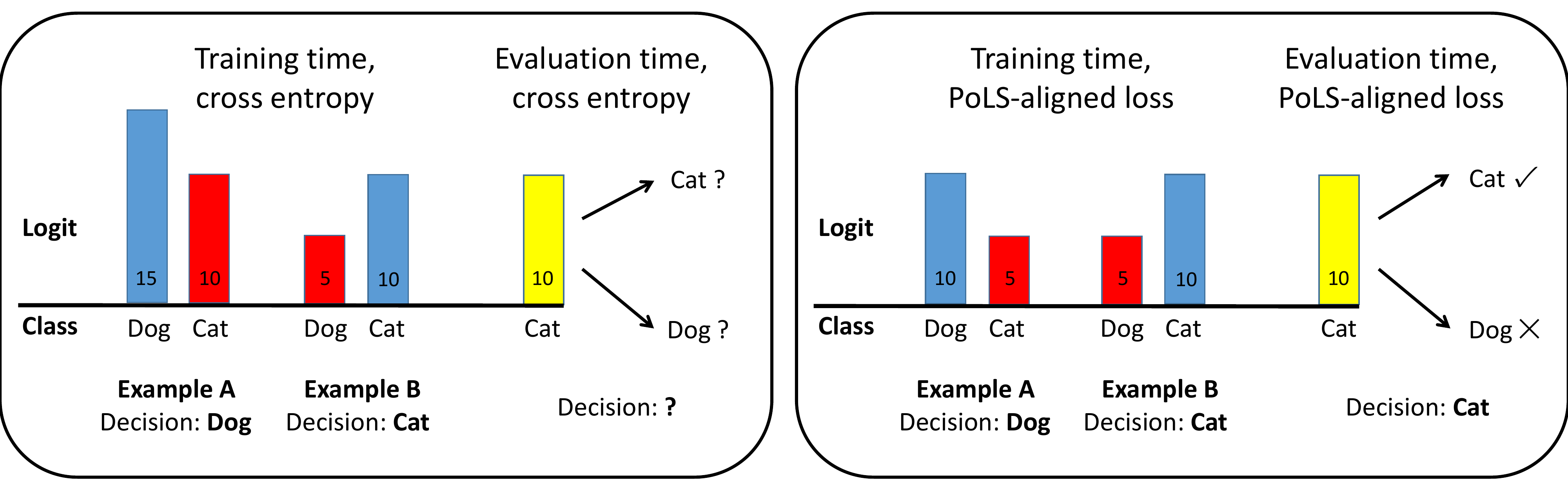}}
\caption{The \lsplong. Left: when training with the cross-entropy loss, the logit values for the class `Cat' can be the same for two examples, one where it is the true class (blue) and one where it is not (red). Therefore, at test-time, a logit with the same value for the class `Cat' does not indicate whether the example belongs to this class. Right: With a loss function that is aligned with the \lsplong, all true logits are greater than all false logits at training time. Hence, at test time, a single logit can indicate the correctness of its respective class.}
\label{fig:pols}
\end{figure*}

We give a formal definition of the \lsplong\ in \secref{pols}. See Figure \ref{fig:pols} for an illustration.
We study previously suggested loss functions and their alignment with the \lsplong. 
We show that the \lsplong\ is satisfied by the self-normalization \citep{DBLP:conf/acl/DevlinZHLSM14} and Noise-Contrastive Estimation \citep{DBLP:conf/icml/MnihT12} training objectives, proposed for calculating posterior distributions in the context of natural language processing, as well as by the binary cross-entropy loss used in multi-label settings \citep{DBLP:conf/cvpr/WangYMHHX16,DBLP:conf/icip/HuangWWT13}. In contrast, the principle is not satisfied by the standard cross-entropy loss and by the max-margin loss. We derive new training objectives for the SLC task based on the \lsplong. These objectives are novel batch versions of the cross-entropy loss and the max-margin loss, and we show that they are aligned with the \lsplong. In total, we study seven different training objectives. \texttt{Tensorflow} code for optimizing the new batch losses is publicly available at \url{https://github.com/cruvadom/Logit_Separation}.

We corroborate in experiments that the \lsplong\ indeed explains the
difference in accuracy of the different loss functions in the SLC task,
concluding that training with a loss function that is aligned with the
\lsplong\ results in logits that are more informative as a
standalone value, and as a result, considerably better SLC accuracy. 
Specifically, objectives that satisfy the \lsplong\ outperform standard objectives such as the cross-entropy loss on the SLC task with at least 20\% relative accuracy improvement in almost all cases, and sometimes considerably more. 
In another set of experiments, we show that when using loss functions that are aligned with the \lsplong, SLC does not cause any decrease in binary classification accuracy for a given class, compared to the case where \emph{all} logits are computed, while keeping the computation cost independent of the total number of classes.
Finally, we perform further experiments to determine the speedup factor one can gain by using SLC instead of computing all logits. We show that when the number of classes is large, considerable speedups are gained. 

We conclude that by designing a training objective according to the \lsplong\ and applying SLC when the number of classes in large, one can gain considerable speedups at test time without any degradation in model accuracy.
As the number of classes in standard datasets has been rapidly growing over the last few years, SLC and the \lsplong\ may play a key role in many applications.


\subsection{Related Work}
We review existing methods that are relevant for faster test-time classification.
The hierarchical softmax layer
\citep{DBLP:conf/aistats/MorinB05}
replaces the flat
softmax layer with a binary tree with classes as leaves, making the
computational complexity of calculating the posterior probability of each
class logarithmic in the number of classes. A drawback of this method is the
additional construction of the binary tree of classes, which requires expert
knowledge or data-driven methods. Inspired by the hierarchical softmax
approach, \citet{DBLP:journals/corr/GraveJCGJ16} exploit unbalanced word
distributions to form clusters that explicitly minimize the average time for
computing the posterior probabilities over the classes. The authors report an
impressive speed-up factor of between $2$ and $10$ for
posterior probability computation, but their computation time still depends on the total number of classes.
Differentiated softmax was introduced in \citet{DBLP:conf/acl/ChenGA16} as a less computationally expensive alternative to the standard softmax mechanism, in the context of neural language models. With differentiated softmax, each class (word) is represented in the last hidden layer using a different dimensionality, with higher dimensions for more frequent classes. This allows a faster computation for less frequent classes. 
However, this method is applicable only for highly unbalanced class distributions.
Several sampling-based approaches were developed in the context of language
modeling, with the goal of approximating the softmax function at
training-time. Notable examples are importance sampling
\citep{DBLP:journals/tnn/BengioS08}, 
negative sampling \citep{DBLP:conf/nips/MikolovSCCD13}, and Noise Contrastive Estimation (NCE) \citep{DBLP:journals/jmlr/GutmannH10,DBLP:conf/icml/MnihT12}. These methods do not necessarily improve the test-time computational burden, however we show below that the NCE loss can be used for the SLC task.


\section{The \lsplong} \label{sec:pols}
In the SLC task, the only information about an example is the output logit of the model for the single class of interest. Therefore, a natural approach to classifying whether the class matches the example is to set a threshold: if the logit is above the threshold, classify the example as belonging to this class, otherwise, classify it as not belonging to the class. We refer to logits that belong to the true classes of their respective training examples as \emph{true logits} and to other logits as \emph{false logits}. For the threshold approach to work well, the values of all true logits should be larger than the value of all false logits across the training sample (in fact, it is enough to separate true and false logits on a class level, but we stick to the stronger assumption in this work). This is illustrated in Figure \ref{fig:pols}. The \lsplong\ (PoLS), which was stated in words in \secref{intro}, captures this requirement. We formalize this principle below.

Let $[k] := \{1,\ldots,k\}$ be the possible class labels. Assume that the training sample is $S  = ((x_1,y_1),\ldots,(x_n,y_n))$, where $x_i \in \reals^d$ are the training examples, and $y_i \in [k]$ are the labels of these examples.
For a neural network model parametrized by $\theta$, we denote by $z^\theta_y(x)$ the value of the logit assigned by the model to example $x$ for class $y$. The \lsplong\ (PoLS) can be formally stated as follows:
\begin{definition}[The \lsplong]\label{def:pols} 
The \emph{\lsplong} holds for a labeled set $S$ and a model $\theta$, if for any $(x,y),(x',y') \in S$ (including the case $x=x',y=y'$) and any $y'' \neq y'$, we have $z^\theta_y(x) > z^\theta_{y''}(x')$.
\end{definition}
The definition assures that every true logit $z^\theta_y(x)$ is larger than every false logit $z^\theta_{y''}(x')$. If this simple principle holds for all train and test examples, it guarantees
perfect accuracy in the SLC task, since all true logits are larger than all false logits. Thus, a good approach for a training
objective for SLC is to attempt to optimize for this principle on the training
set.  For a loss $\ell$, $\ell(S,\theta)$ is the value of the loss on the
training sample using model $\theta$. A loss $\ell$ is aligned with the
\lsplong\ if for any training sample $S$, a small enough value of
$\ell(S, \theta)$ ensures that the requirement in Definition~\ref{def:pols} is
satisfied for the model $\theta$.
 In the following sections we study the alignment with the PoLS of known losses and new losses.

\section{Standard objectives in view of the PoLS} \label{sec:nopols} In this section we show that the cross-entropy loss \citep{hinton1989connectionist}, which is the standard loss function for neural network classifiers (e.g., \citet{DBLP:conf/nips/KrizhevskySH12}) and the multiclass max-margin loss \citep{DBLP:journals/jmlr/CrammerS01}, do not satisfy the PoLS.

\subsection{The Cross-Entropy Loss}
The cross-entropy loss on a single example is defined as
\begin{equation}\label{eq:ce}
\ell(z,y) = -\log(p_y),
\end{equation}
where
\[p_y := \frac{e^{z_y}}{\sum_{j=1}^k e^{z_j}} = \big(\sum _{j=1}^k e^{z_j-z_y}\big)^{-1}.\]
Note that $p_y$ is the probability assigned by the softmax layer.
It is easy to see that the cross-entropy loss does not satisfy the PoLS. Indeed, as the loss depends only on the difference between logits for every example separately, minimizing it guarantees a certain difference between the true and false logits for every example separately, but does not guarantee that all true logits are larger than all false logits in the training set. 
Formally, the following counter-example shows that this loss is not aligned with the PoLS.
Let $S = ((x_1,1),(x_2,2))$ be the training sample, and let $\theta_\alpha$, for $\alpha > 0$, be a model such that $z^{\theta_\alpha}(x_1) = (2\alpha,\alpha)$, and $z^{\theta_\alpha}(x_2) = (-2\alpha,-\alpha)$. 
Then $\ell(S_{\theta_\alpha}) = 2\log(1+e^{-\alpha})$. Therefore for any $\epsilon > 0$, there is some $\alpha > 0$ such that $\ell(S_{\theta_\alpha}) \leq \epsilon$, but $z_2^{\theta_\alpha}(x_1) > z_2^{\theta_\alpha}(x_2)$, contradicting an alignment with PoLS.

\subsection{The Max-Margin Loss}\label{sec:max-margin}
Max-margin training objectives, most widely known for their role in training Support Vector Machines, are used in some cases for training neural networks \citep{DBLP:conf/icml/SocherLNM11,DBLP:journals/corr/JanochaC17}. 
Here we consider the multiclass max-margin loss suggested by \cite{DBLP:journals/jmlr/CrammerS01}, defined as
\begin{equation}\label{eq:maxmargin}
\ell(z,y) = \max (0, \gamma - z_y + \max \limits _{j \neq y} z_j),
\end{equation}
where $\gamma > 0$ is a hyperparameter that controls the separation margin between the true logit and the false logits of the example.
It is easy to see that this loss too does not satisfy the PoLS, since minimizing it again guarantees only a certain difference between the true and false logits for every example separately, and not across the entire training sample. Indeed, consider the same training sample $S$ as defined in the counter-example for the cross-entropy loss above, and the model $\theta_\alpha$ defined there. Setting $\alpha = \gamma$, we have $\ell(S_{\theta_{\gamma}}) = 0$. Thus for any $\epsilon > 0$, $\ell(S_{\theta_{\gamma}}) < \epsilon$, but $z_2^{\theta_\gamma}(x_1) > z_2^{\theta_\gamma}(x_2)$, contradicting an alignment with PoLS.

\section{Objectives that satisfy the PoLS} \label{sec:known}
In this section we consider objectives that have been previously suggested for addressing problems that are somewhat related to the SLC task. We show that these objectives indeed satisfy the PoLS.

\subsection{Self-Normalization} \label{sec:selfnorm} 
Self-normalization \citep{DBLP:conf/acl/DevlinZHLSM14} was introduced in the context of neural
language models, to avoid the costly step of computing the posterior
probability distribution over the entire vocabulary when evaluating the
trained models. The self-normalization loss is a sum of the cross-entropy loss
with an additional term. Let $\alpha > 0$ be a hyperparameter, and $p_y$ as
defined in \eqref{ce}. The self-normalization loss is defined by
\[
\ell(z,y) = -\log (p_y) +\alpha \cdot \log ^2 (\sum \limits _{j=1}^k e^{z_j}).
\]
The motivation for this loss is self-normalization: The second term is minimal
when the softmax normalization term $\sum_{j=1}^k e^{z_j}$ is equal to
$1$. When it is equal to $1$, the exponentiated logit $e^{z_j}$ can be
interpreted as the probability that the true class for the example is
$j$. 
\citet{DBLP:conf/acl/DevlinZHLSM14} report a speed-up by a factor of 15 in
evaluating models trained when using this loss, since the
self-normalization enables computing the posterior probabilities for only a subset of the vocabulary. 

Intuitively, this loss should also be useful for the SLC task: If the softmax normalization term is
always close to $1$, there should be no need to compute it, thus only the logit of the class in question should be required to infer whether this class in the correct one for the example. 
Indeed, we show that the self-normalization loss is aligned with the PoLS. When the first term in the loss is minimized for an example, correct and wrong logits are as different as possible from one another. When the second term is minimized for an example, the sum of exponent logits is equal to one. Therefore, when both terms are minimized for an example, the correct logit converges to zero while wrong logits converge to negative infinity. When this is done for the whole training sample, all correct logits are larger than all wrong logits in the training sample. A formal proof is provided in \appref{selfnorm}.

\subsection{Noise Contrastive Estimation} \label{sec:nce} Noise Contrastive
Estimation (NCE) \citep{DBLP:journals/jmlr/GutmannH10,DBLP:conf/icml/MnihT12}
was considered, like self-normalization, in the context of natural language
learning. This approach was developed to speed up neural-language model
training over large vocabularies. In NCE, the multiclass classification problem is
treated as a set of binary classification problems, one for each class. Each
binary problem classifies, given a context and a word, whether this word is
from the data distribution or from a noise distribution. Using only $t$ words
from the noise distribution (where $t$ is an integer hyperparameter) instead of the entire vocabulary leads to a
significant speedup at training-time. Similarly to the self-normalization
objective, NCE, in the version appearing in \citet{DBLP:conf/icml/MnihT12}, is
known to produce a self-normalized logit vector
\citep{DBLP:conf/naacl/AndreasK15}. This property makes NCE a good candidate
for the SLC task, as single logit values are informative for the class correctness, and not only when compared other logits in the same example.

The loss function used in NCE for a single training example, as given by \cite{DBLP:conf/icml/MnihT12}, is defined based on a distribution over the possible classes, denoted by $q = (q(1),\ldots,q(k))$, where $\sum_{i=1}^k q(i) = 1$. 
The NCE loss, in our notation, is 
\begin{equation}\label{eq:gj} 
\ell(z,y) = -\log g_y -  t\cdot \E_{j \sim q}\left[\log(1-g_j)\right],
\end{equation}
where
\[g_j := (1 + t\cdot q(j)\cdot e^{-z_j})^{-1}\]
During training, the second term in the loss is usually approximated by Monte-Carlo approximation, using $t$ random samples of $j \sim q$, to speed up training time \citep{DBLP:conf/icml/MnihT12}.

We observe that NCE loss is aligned with the PoLS. First, observe that $g_j$ is of a similar form to $\sigma(z_j)$ where $\sigma(z) = (1 + e^{-z})^{-1}$ is the sigmoid function. Therefore, it is easy to see that when the term above is minimized for one example, the value of true logit $z_y$ converges to infinity, and the values of all false logits converge to negative infinity. When the above term is minimized for the entire training set, all true logits are larger than all false logits across the training set. A formal proof is provided in \appref{nce}.

\subsection{Binary Cross-Entropy} \label{sec:binaryce}
The last known loss that we consider is often used in multilabel classification settings. In multilabel settings, each example can belong to several classes, and the goal is to identify the set of classes an example belongs to. A common approach \citep{DBLP:conf/cvpr/WangYMHHX16,DBLP:conf/icip/HuangWWT13} is to try to solve $k$ binary classification problems of the form ``Does $x$ belong to class $j$?'' using a single neural network model, by minimizing the sum of the cross-entropy losses that correspond to these binary problems. 
In this setting, the label of each example is a binary vector $(r_1,\ldots, r_k)$, where $r_j = 1$ if $x$ belongs to class $j$ and 0 otherwise. The loss for a single training example with logits $z$ and label-vector $r$ is
\[
\ell(z,(r_1,\ldots,r_k)) = -\sum \limits _{j=1}^n r_j \log(\sigma(z_j)) + (1-r_j)\log(1-\sigma(z_j)),
\]
where $\sigma(z) = (1 + e^{-z})^{-1}$ is the sigmoid function. This loss can also be used for our setting of multiclass problems, by defining $r_j:=\mathbf{1}_{j = y}$ for an example $(x,y)$. This gives the multiclass loss 
\[
\ell(z,y) = -\log(\sigma(z_y)) + \sum_{j \neq y}\log(1-\sigma(z_j)).
\]

The binary cross-entropy is also aligned with the PoLS. Indeed, similarly to case of the NCE loss, it is easy to see that when the term above is minimized for one example, the value of true logit $z_y$ converges to infinity, and the values of all false logits converge to negative infinity. When the above term is minimized for the entire training set, all true logits are larger than all false logits across the training set. A formal proof is provided in  \appref{binaryce}.

\section{New training objectives for the SLC task}\label{sec:batchlosses}
In this section we propose new training objectives for the SLC task, designed
to satisfy the PoLS. These objectives adapt the training objectives of
cross-entropy and max-margin, studied in \secref{nopols}, that do not satisfy
the PoLS, by generalizing them to optimize over \emph{batches} of training
samples. We show that the revised losses satisfy the PoLS. This approach does
not require any new hyper-parameters, since the batch size is already a
hyperparameter in standard Stochastic Gradient Descent. Further, this allows an easy adaptation of
available neural network implementations to the SLC task. When the
cross-entropy loss or the max-maring loss are minimized, they guarantee a certain difference between the true and the false
logits of each example separately. Our generalization of these losses to
batches of examples enforces an ordering also between true and false logits of
different examples.

\subsection{Batch Cross-Entropy} \label{sec:batchce}
\newcommand{\kl}{\mathrm{KL}} Our first batch loss generalizes the
cross-entropy loss, which was defined in \eqref{ce}. The
cross-entropy loss can be given as the Kullback-Leibler (KL) divergence
between two distributions, as follows. 
The KL divergence between two discrete probability distributions $P$ and $Q$ over
$[k]$ is defined as $\mathrm{KL}(P||Q) := \sum _{i=j}^k P(j) \log (P(j)/Q(j)).$
For an example $(x,y)$, let $P_{(x,y)}$ be the distribution over $[k]$ which deterministically outputs $y$, and let $Q_x$ be the distribution defined by the softmax normalized logits, $Q_x(j) = e^{z_j}/\sum_{i=1}^k e^{z_i}$. Then 
it is easy to see that for $p_y$ as defined in \eqref{ce},
$\mathrm{KL}(P_{(x,y)}||Q_x) = -\log p_y,$
exactly the cross-entropy loss in \eqref{ce}. 

We define a batch version of this loss, using the KL-divergence between distributions over batches.
Recall that the $i$'th example in a batch $B$ is denoted $(x_i,y_i)$. Let $P_{B}$ be the distribution over $[m] \times [k]$ defined by 
\[ 
P_B(i,j) := \begin{cases} 
			\frac{1}{m} & j=y_i,\\
			0 & \text{otherwise}.
			
\end{cases} 
\]
Let $Q_B$ be the distribution defined by the softmax normalized logits over the entire batch $B$. Formally, denote $Z(B) := \sum \limits _{i=1}^m \sum \limits _{j=1}^k e^{z_j(x_i)}$. Then $Q_B(i,j) := e^{z_j(x_i)}/Z(B)$. We then define the batch cross-entropy loss as follows.
\begin{definition}[The batch cross-entropy loss]\label{def:cross}
Let $m > 1$ be an integer, and let $B$ be a uniformly random batch of size $m$ from $S$. The \emph{batch cross-entropy loss} of a training sample $S$ is 
\[
\ell(S) := \E_B[L_c(B)],
\quad\text{ where }\quad
L_c(B) := \mathrm{KL}(P_B || Q_B).
\]
\end{definition}

This batch version of the cross-entropy loss is aligned with the PoLS. Indeed, when this loss is minimized for one training batch, all true logits converge to some positive value (as a normalized exponentiated true logit converges to $1/m$), while all false logits converge to negative infinity (as a normalized exponentiated false logit converges to zero). Therefore, when minimizing this loss across the whole training set, all true logits are larger than all false logits in the training set. A formal proof is provided in  \appref{new}.

\subsection{Batch Max-Margin} 
Our second objective is a batch version of the max-margin loss, which was defined in \eqref{maxmargin}. For a batch $B$, denote the minimal true logit in $B$, and the maximal false logit in $B$, as follows: 
\[ 
z_{+}^B := \min_{(x,y) \in B} z_{y}(x), \quad\text{and}\quad z_{-}^B := \max_{(x,y) \in B, j \neq y} z_{j}(x).
\]
\begin{definition}[The batch max-margin loss]\label{def:margin}
Let $m > 1$ be an integer, and let $B$ be a uniformly random batch of size $m$ from $S$. Let $\ell$ be the single-example max-margin loss defined in \eqref{maxmargin}, let $\gamma > 0$ be the max-margin hyper-parameter. The \emph{batch max-margin} is defined by 
\[\ell(S) := \E_B[L_m(B)],\]
where
\[L_m(B) := \frac{1}{m} \max (0, \gamma - z_{+}^B + z_{-}^B) + \frac{1}{m} \sum \limits _{(x,y) \in B} \ell(z(x),y). \]
\end{definition}
The batch version of the max-margin loss is aligned with the PoLS. Minimizing the first term in the loss makes sure that all true logits in the batch are larger than all false logits in the batch. Therefore, minimizing the loss over the entire training set makes sure that the PoLS holds. A formal proof is provided in \appref{new}. Note that while the seconds term in the loss is not necessary for ensuring alignment with the PoLS, it is necessary for practical reasons, as without it the gradient is propagated through only two logits from the entire minibatch, which leads to harder optimization and poorer generalization.

\section{Experiments}\label{sec:experiments}
We first empirically show that the PoLS plays a dominant role in SLC accuracy, and that all loss functions that are aligned with the PoLS yield considerably better accuracy in SLC compared to loss functions that are not aligned with the PoLS.
We then investigate whether a single logit suffices for obtaining  a good binary classification accuracy, compared to the method of computing all logits and using them for normalization. We find that when using a PoLS-aligned loss function, binary classification accuracy is about the same whether we use a single logit only (SLC) or all logits. Lastly, we evaluate the speedups gained by using SLC for binary classification, instead of computing all logits. We show that the speedup increases with the number of classes. For instance, we demonstrate a 10x speedup when the number of classes is approximately 400,000.

\tabresone 

\subsection{PoLS and SLC Accuracy} \label{sec:exp1}
We tested the SLC tasks on neural networks trained with each of the objectives. 
To evaluate the success of a learned model in the SLC task we measured, for each class $j$ and each threshold $T$, the precision and recall in identifying examples from class $j$ using the test $z_j > T$, and calculated the Area Under the Precision-Recall curve (AUPRC) defined by the entire range of possible thresholds. We also measured the precision at fixed recall values (with dictate the threshold $T$ to use) $0.9$ (Precision@0.9) and $0.99$ (Precision@0.99). We report the averages of these values over all the classes in the dataset. 

We tested five computer-vision classification benchmark datasets (using their built-in train/test splits): MNIST \citep{lecun1998gradient}, SVHN \citep{netzer2011reading} CIFAR-10 and CIFAR-100 \citep{krizhevsky2009learning}. The last dataset is Imagenet \citep{DBLP:journals/ijcv/RussakovskyDSKS15}, which has 1000 classes, demonstrating the scalability of the PoLS approach to many classes. Due to its size, training on Imagenet is highly computationally intensive, therefore we tested only two representative methods for this dataset, which do not require tuning additional hyperparameters. For every dataset, a single network architecture was used for all training objectives. 

The network architectures we used are standard, and were fixed before running the experiments. For the MNIST dataset, we used an MLP comprised of two fully-connected layers with 500 units each, and an output layer, whose values are the logits, with 10 units. For the SVHN, CIFAR-10 and CIFAR-100 datasets, we used a convolutional neural network \citep{lecun1989backpropagation} with six convolutional layers and one dense layer with 1024 units. The first, third and fifth convolutional layers used a $5 \times 5$ kernel, where other convolutional layers used a $1 \times 1$ kernel. The first two convolutional layers were comprised of 128 feature maps, where convolutional layers three and four had 256 feature maps, and convolutional layers five and six had 512 feature maps. Max-pooling layers with $3 \times 3$ kernel size and a $2 \times 2$ stride were applied after the second, fourth and sixth convolutional layers. In all networks, batch normalization \citep{DBLP:conf/icml/IoffeS15} was applied to the output of every fully-connected or convolutional layer, followed by a rectified-linear non-linearity. For every combination of a training objective and a dataset (with its fixed network architecture), we optimized for the best learning rate among  $1, 0.1, 0.01, 0.001$ using the classification accuracy on a validation set. Except for Imagenet, each model was trained for $10^5$ steps, which always sufficed for convergence. For the Imagenet experiments, we used an inception-v3 architecture \citep{DBLP:conf/cvpr/SzegedyVISW16} as appears in the \texttt{tensorflow} 
repository. 
We used all the default hyperparameters from this implementation, changing only the loss function used. For every tested loss function, we trained the inception-v3 model for $6 \cdot 10^6$ iterations.

Experiment results are reported in Table \ref{tab:resultsone}. Since many of the measures in our experiments are close to their maximal value of $1$, we report the value of \emph{one minus} each measure, so that a smaller number indicates a better accuracy. 
For each dataset, the losses above the dashed line do not satisfy the PoLS (\secref{nopols}), while the losses below the line do (Sections \ref{sec:known} and \ref{sec:batchlosses}). 
In the table, the best result for each dataset and measure is indicated in boldface.
Finally, the bottom row in each dataset stands for the mean relative improvement between PoLS-aligned losses and other losses, i.e., the relative improvement of the mean of PoLS-aligned losses for a given measure, compared to the mean of losses that are not aligned with the PoLS.

From the results in Table \ref{tab:resultsone}, it can be seen that the mean relative improvement of training objectives that are aligned with the PoLS compared to non-aligned objectives is usually at least 20\%, and in many cases considerably more. We conclude from these experiments that indeed, alignment with the PoLS is a crucial ingredient for success in the SLC task.


\subsection{SLC vs Computing All Logits} \label{sec:exp2}
To investigate whether using a single logit (SLC) degrades binary classification accuracy, we compared the binary classification accuracy obtained by SLC with the one obtained by using all logits. In the latter case, we used \emph{all} output logits of the cross-entropy loss after softmax normalization, which is computationally expensive compared to SLC. The experiment setting and binary classification accuracy measures are identical to the one used in Section \ref{sec:exp1}. 

Results are presented in Table \ref{tab:resultstwo}. The first two rows present results for binary classification using a single logit (SLC): The first row reports the mean result for loss functions that are aligned with the PoLS reported in Table \ref{tab:resultsone}. The second row reports the results for the batch cross-entropy, which is the PoLS-aligned loss that obtained the best accuracy in Table \ref{tab:resultsone}. The third row reports results for the cross-entropy method, in which all logits are computed and used for normalization.

\tabrestwo

The results in Table \ref{tab:resultstwo} show that cross-entropy with all logits yields results that are comparable to the results obtained with a single logit, and specifically, with the batch cross-entropy: none of the approaches is consistently more successful in classification than the other. Since calculating cross-entropy with all logits is more computationally demanding, we conclude that in our setting of binary classification with multiple classes at test time, SLC, and specifically batch cross-entropy, is an attractive alternative to standard cross-entropy with all logits. 

\subsection{SLC Speedups}
We estimated the speedups gained by performing SLC, compared to methods in which all logits are computed. 
We used five prominent image classification architectures (Alexnet \citep{DBLP:conf/nips/KrizhevskySH12}, VGG-16 \cite{DBLP:journals/corr/SimonyanZ14a}, Inception-v3 \citep{DBLP:conf/cvpr/SzegedyVISW16}, Resnet-50 and Resnet-101 \citep{DBLP:conf/cvpr/HeZRS16}). As we are interested in test-time performance, we only measure the time required for computing the forward-pass of a given network. To measure SLC computation time, we replace the top layer by a layer with single unit and measure the time to compute the single logit given an input to the network. To measure the computation time when computing the logits of $k$ classes, we replace the top layer with a layer containing $k$ units, and again measure the time it takes to compute all logits, given an input to the network.

The computation time of a model generally does not depend on its input data or on its accuracy. The time to compute a forward pass of a given model is practically identical whether the input data is random noise or originates from a real dataset of the same dimensions and data range. Therefore, in these experiments we use random noise as input to the networks, and the networks themselves are randomly initialized and not trained. Computation is done using \texttt{Tensorflow} and a single NVIDIA Maxwell Titan-X GPU, and forward-pass computation time per example is averaged across 100 minibatches of 32 examples. We use the public implementation of all architectures, as appears in the \texttt{tensorflow} repository.

The timing results are given in Table
\ref{tab:resultsthree}. For each network architecture, the first row reports
the forward-pass computation time with a single logit. The following rows
correspond to different numbers of classes. We report the
forward-pass computation time, as well as the speedup obtained by
using SLC for this number of classes. This speedup is calculated as the ratio
between the computation time for this number of classes and the computation
time for SLC (the first row).  As expected, the results show a speedup for all
architectures, with larger speedups when there are more classes. 

For networks with up to $2^{14}=16384$ classes, the speedup is relatively
small, since computation of the network layers other than the logit layer dominates the forward-pass computation time. In contrast, when there are many classes, the computation of logits dominates the forward-pass
computation time. Hence, SLC obtains a x2.8-x.5.7 speedup for
$2^{18} = 262144$ classes, and x8.5-x21.3 speedup for $2^{18.5} = 370727$
classes. 

In our experiments in Sections \ref{sec:exp1} and \ref{sec:exp2}, we showed that our findings scale well from 10 to 100 and 1000 classes, and we expect these results and findings to scale further to models with a larger number of classes. 
Ideally, we would have directly tested datasets with hundreds of thousands of classes, to show that the results from Sections \ref{sec:exp1} and \ref{sec:exp2} scale to datasets with this many classes. However, since such datasets are very large (for instance, the Imagenet-21K dataset has $21,\!000$ classes and more than 14 million examples), these experiments were infeasible with our computational resources. In comparison, a single Inception-V3 model for the significantly smaller Imagenet dataset ($\sim$ 1 million examples), took approximately three weeks to train on a single GPU. 


We conclude from this set of experiments that using SLC instead of computing all logits can result in considerable speedups, that grow with the number of classes. 

\tabresthree


\section{Conclusion} 
We considered the Single Logit Classification (SLC) task, which is important in various applications. 
We formulated the \lsplong\ (PoLS), and studied its alignment with seven loss functions, 
including the standard cross-entropy loss and two novel loss functions. 
We established, and corroborated in experiments, that PoLS-aligned loss functions yield more class logits that are more useful for binary classification. 
We further demonstrated 
that training with a PoLS-aligned loss function and applying SLC leads to considerable speedups when there are many classes, with no degradation in accuracy. 
Recent years have seen a constant increase in the number of classes in datasets from various domains, thus we expect SLC and the PoLS to play a key role in applications.

Future work plans include extending the scope the \lsplong\ by applying it to other training mechanisms that do not involve loss functions \cite{keren2017tunable}.

\section*{Acknowledgment}
This work has been supported by the European Community’s Seventh Framework Programme through the ERC Starting Grant No. 338164 (iHEARu). Sivan Sabato was supported in part by the Israel Science Foundation (grant No. 555/15).

\appendix

\section{Ommitted proofs: alignment with the \lsplong}
All the considered losses are a function of the output logits and the example labels. For a network model $\theta$, denote the vector of logits it assigns to example $x$ by $z^\theta(x) = (z^\theta_1(x),\ldots,z^\theta_k(x))$. When $\theta$ and $x$ are clear from context, we write $z_j$ instead of $z_j^\theta(x)$. Denote the logit output of the sample by $S_\theta  = ((z^\theta(x_1),y_1),\ldots,(z^\theta(x_n),y_n))$. A loss function  
$\ell:\cup_{n=1}^\infty (\reals^k \times [k])^n  \rightarrow \reals_+$ assigns a loss to a training sample based on the output logits of the model and on the labels of the training examples. The goal of training is to find a model $\theta$ which minimizes $\ell(S_\theta) \equiv \ell(S,\theta)$.
In almost all the losses we study,  the loss on the training sample is the sum over all examples of a loss defined on a single example: $\ell(S_\theta) \equiv \sum_{i=1}^n \ell(z^\theta(x_i),y_i)$, thus we only define $\ell(z,y)$. We explicitly define $\ell(S_\theta)$ below only when this is not the case.

\subsection{Self-normalization} \label{ap:selfnorm}
We prove that the self-normalization loss satisfies the PoLS: Let a training sample $S$ and a neural network model $\theta$, and consider an example $(x,y) \in S$. We consider the two terms of the loss in order. First, consider $-\log(p_y)$. From the definition of $p_y$ (Eq. \ref{eq:ce}) we have that 
\[
- \log(p_y) = \log(\sum _{j=1}^k e^{z_j-z_y}) = \log(1+ \sum _{j\neq y} e^{z_j-z_y}).
\]
Set $\epsilon_0 := \log(1+e^{-2})$. Then, if $-\log (p_y) < \epsilon_0$, we have 
$\sum _{j\neq y} e^{z_j-z_y} \leq e^{-2}$, which implies that (a) $\forall j \neq y, z_j \leq z_y - 2$ and (b) $e^{z_y} \geq \sum _{j=1}^k e^{z_j}/(1+e^{-2}) \geq \frac12 \sum _{j=1}^ke^{z_j}.$
Second, consider the second term. There is an $\epsilon_1 > 0$ such that if $\log ^2 (\sum_{j=1}^k e^{z_j}) < \epsilon _1$ then (c) $2e^{-1} < \sum  _{j=1}^k e^{z_j} < e$,
which implies $e^{z_y} < e$ and hence (d) $z_y < 1$. 
Now, let $\theta$ such that $\ell(S_\theta) \leq \epsilon := \min(\epsilon_0, \epsilon_1)$. Then $\forall (x,y) \in S$, $\ell(z^\theta(x),y) \leq \epsilon$. From (b) and (c), $e^{-1} < \frac{1}{2} \sum _{j=1}^k e^{z_j} < e^{z^y}$, hence $z_y > -1$. Combining with (d), we get $-1 < z_y < 1$. Combined with (a), we get that for $j \neq y$, $z_j < -1$. To summarize, $\forall (x,y),(x',y') \in S$ and $\forall y'' \neq y'$, we have that $z_y^\theta(x) > -1 > z_{y''}^\theta(x')$, implying PoLS-alignment.

\subsection{Noise-Contrastive Estimation} \label{ap:nce}
Recall the definition of the NCE loss from \eqref{gj}:
\[\ell(z,y) = -\log g_y -  t\cdot \E_{j \sim q}\left[\log(1-g_j)\right]
\]
 where $g_j := (1 + t\cdot q(j)\cdot e^{-z_j})^{-1}.$
We prove that the NCE loss satisfies the PoLS: $g_j$ is monotonic increasing in $z_j$. Hence, if the loss is small, $g_y$ is large and $g_j$ for $j \neq y$, is small. 
Formally, fix $t$, and let a training
sample $S$. There is an $\epsilon_0 > 0$ such that if $-\log g_j \leq \epsilon_0$, then $z_j > 0$. 
Also, there is an $\epsilon_1 > 0$ (which depends on $q$) such that if $- \E_{j \sim q}\left[\log(1-g_j)\right]\leq \epsilon_1$ then $\forall j \neq y$, $\log(1-g_j)$ must be small enough so that $z_j < 0$. 
Now, consider $\theta$ such that $\ell(S_\theta) \leq \epsilon := \min(\epsilon_0, \epsilon_1)$. Then for every $(x,y) \in S$, $\ell(z^\theta(x),y) \leq \epsilon$. This implies that for every $(x,y),(x',y') \in S$ and $y'' \neq y'$, we have that $z^\theta_y(x) > 0 > z^\theta_{y''}(x')$, thus this loss is aligned with the PoLS.

\subsection{Binary cross-entropy} \label{ap:binaryce}
This loss is similar in form to the NCE loss: for $g_j$ as in \eqref{gj}, $g_j = \sigma(z_j - \ln(t \cdot q(j)))$. Since $\sigma(z_j)$ is monotonic, the proof method for NCE carries over and thus the binary cross-entropy loss satisfies the PoLS as well. 

\subsection{Batch Losses}  \label{ap:new}
Recall that the batch losses are defined as $\ell(S_\theta) := \E_B[L(B_\theta)]$, where $B_\theta$ is a random batch out of $S_\theta$ and $L$ is $L_c$ for the cross entropy (Definition~\ref{def:cross}), and $L_m$ is the max-margin loss (Definition~\ref{def:margin}).
If true logits are greater than false logits in every batch separately when using, then the PoLS is satisfied on the whole sample, since every pair of examples appears together in some batch. The following lemma formalizes this: 
\begin{lemma}\label{lem:batch}
If $L$ is aligned with the PoLS, and $\ell$ is defined by $\ell(S_\theta) := \E_B[L(B_\theta)]$, then $\ell$ is also aligned with the PoLS. 
\end{lemma}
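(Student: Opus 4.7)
The plan is to unpack the definition of PoLS-alignment for $L$ and exploit the fact that $\ell(S_\theta)$ is a finite nonnegative combination of the batch losses. By assumption on $L$, for every batch $B$ there is a threshold $\epsilon_B > 0$ such that $L(B_\theta) \leq \epsilon_B$ forces PoLS to hold on $B$. Since a training sample $S$ of size $n$ admits only finitely many size-$m$ batches, we can pick a single uniform threshold $\epsilon^\star := \min_B \epsilon_B > 0$.

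Next I would translate a bound on the expectation into a bound on each term. Writing $\ell(S_\theta) = \sum_B \Pr[B] \cdot L(B_\theta)$ with each summand nonnegative (since $L \geq 0$), choosing $\epsilon := \epsilon^\star \cdot \min_B \Pr[B]$ ensures that whenever $\ell(S_\theta) \leq \epsilon$ we also get $L(B_\theta) \leq \epsilon^\star$ for \emph{every} batch $B$. Hence PoLS holds on every batch simultaneously.

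The final step is to promote per-batch PoLS to sample-wide PoLS. Given any two training points $(x,y),(x',y') \in S$ (possibly identical) and any $y'' \neq y'$, I would invoke the hypothesis $m > 1$ in Definitions~\ref{def:cross} and~\ref{def:margin} to argue that there is at least one batch $B$ containing both $(x,y)$ and $(x',y')$ (when $x = x'$, any batch containing $(x,y)$ together with any second point suffices). Applying PoLS on that particular batch $B$ yields $z_y^\theta(x) > z_{y''}^\theta(x')$, which is exactly Definition~\ref{def:pols} on all of $S$.

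The only delicate point is the uniform-threshold step: one must confirm that the threshold witnessing PoLS-alignment for $L$ can be chosen to depend only on the batch (or indeed taken as a single constant) rather than on $\theta$, so that the finite minimum $\epsilon^\star$ is well-defined. For both $L_c$ and $L_m$ this is immediate from the explicit thresholds produced in Appendix~\ref{ap:new}, so the obstacle is essentially bookkeeping rather than substance.
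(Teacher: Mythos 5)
Your proof is correct and follows essentially the same route as the paper's: both use nonnegativity of $L$ to convert a bound on the expectation $\E_B[L(B_\theta)]$ into a per-batch bound (the paper takes $\epsilon = \epsilon'/\binom{n}{m}$, which is exactly your $\epsilon^\star\cdot\min_B\Pr[B]$ for the uniform batch distribution), and both then use the fact that any two examples of $S$ lie in a common batch. Your extra care in allowing the alignment threshold to vary per batch and minimizing over the finitely many batches is a harmless refinement of the same argument.
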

\begin{proof}
Assume a training sample $S$ and a neural network model $\theta$. Since $L$ is aligned with the PoLS, there is some $\epsilon' > 0$ such if $L(B_\theta) < \epsilon'$, then for each $(x,y),(x',y') \in B$ and $y'' \neq y'$ we have that $z_y^\theta(x) > z_{y''}^\theta(x')$. Let $\epsilon = \epsilon'/\binom{n}{m}$, and assume $\ell(S_\theta) < \epsilon$. Since there are $\binom{n}{m}$ batches of size $m$ in $S$, this implies that for every batch $B$ of size $m$, $L(B_\theta) \leq \epsilon'$. 
For any $(x,y),(x',y') \in S$, there is a batch $B$ that includes both examples. Thus, for $y'' \neq y'$, $z_y^\theta(x) > z_{y''}^\theta(x')$. Since this holds for any two examples in $S$, $\ell$ is also PoLS-aligned.
\end{proof}


\subsubsection{Batch cross-entropy}
To show that the batch cross-entropy satisfies the PoLS, we show that $L_c$ does, which by \lemref{batch} implies this for $\ell$. 
By the continuity of $\kl$, and since for discrete distributions,
$\kl(P||Q) =0 \iff P \equiv Q$, there is an $\epsilon > 0$ such that if $L(B_\theta) \equiv \mathrm{KL}(P_B || Q^\theta_B)] < \epsilon$, then for all $i,j$,
$|P_B(i,j) - Q^\theta_B(i,j)| \leq \frac{1}{2m}$.
Therefore, for each example $(x,y) \in B$,
\[ 
\frac{e^{z_y^\theta(x)}}{Z(B)} > \frac{1}{2m}, \qquad\text{ and }\qquad \forall j \neq y, \quad \frac{e^{z_j^\theta(x)}}{Z(B)} < \frac{1}{2m}.
\]
It follows that for any two examples $(x,y),(x',y') \in B$, if $y \neq y'$, then $z_y^\theta(x) > \frac{1}{2m} > z_{y'}^\theta(x')$. Therefore $L$ satisfies the PoLS, which completes the proof.

\subsubsection{Batch max-margin}
To show that the batch max-margin loss satisfies the PoLS, we show this for $L_m$ and invoke \lemref{batch}.
Set $\epsilon = \gamma/m$. If $L(B_\theta) < \epsilon$, then $\gamma -z_{+}^B + z_{-}^B < \gamma$, implying $z_{+}^B > z_{-}^B$. Hence, any $(x,y),(x',y') \in B$ such that $y \neq y'$ satisfy $z^\theta_y(x) \geq z_{+}^B > z_{-}^B \geq z^\theta_y(x')$. Thus $L$ is aligned with the PoLS, implying the same for $\ell$.


\bibliographystyle{IEEEtran}
\bibliography{pols}

\begin{thebibliography}{10}
\providecommand{\url}[1]{#1}
\csname url@samestyle\endcsname
\providecommand{\newblock}{\relax}
\providecommand{\bibinfo}[2]{#2}
\providecommand{\BIBentrySTDinterwordspacing}{\spaceskip=0pt\relax}
\providecommand{\BIBentryALTinterwordstretchfactor}{4}
\providecommand{\BIBentryALTinterwordspacing}{\spaceskip=\fontdimen2\font plus
\BIBentryALTinterwordstretchfactor\fontdimen3\font minus
  \fontdimen4\font\relax}
\providecommand{\BIBforeignlanguage}[2]{{%
\expandafter\ifx\csname l@#1\endcsname\relax
\typeout{** WARNING: IEEEtran.bst: No hyphenation pattern has been}%
\typeout{** loaded for the language `#1'. Using the pattern for}%
\typeout{** the default language instead.}%
\else
\language=\csname l@#1\endcsname
\fi
#2}}
\providecommand{\BIBdecl}{\relax}
\BIBdecl

\bibitem{DBLP:conf/cvpr/DengDSLL009}
J.~Deng, W.~Dong, R.~Socher, L.~Li, K.~Li, and F.~Li, ``Imagenet: {A}
  large-scale hierarchical image database,'' in \emph{Proceedings of {CVPR}},
  Miami, FL, 2009, pp. 248--255.

\bibitem{DBLP:journals/corr/PartalasKBAPGAA15}
I.~Partalas, A.~Kosmopoulos, N.~Baskiotis, T.~Arti{\`{e}}res, G.~Paliouras,
  {\'{E}}.~Gaussier, I.~Androutsopoulos, M.~Amini, and P.~Gallinari, ``{LSHTC:}
  {A} benchmark for large-scale text classification,'' \emph{Arxiv}, vol.
  abs/1503.08581, 2015.

\bibitem{DBLP:conf/icml/WestonMY13}
J.~Weston, A.~Makadia, and H.~Yee, ``Label partitioning for sublinear
  ranking,'' in \emph{Proceedings of {ICML}}, Atlanta, GA, 2013, pp. 181--189.

\bibitem{DBLP:journals/jmlr/GuptaBW14}
M.~R. Gupta, S.~Bengio, and J.~Weston, ``Training highly multiclass
  classifiers,'' \emph{Journal of Machine Learning Research}, vol.~15, no.~1,
  pp. 1461--1492, 2014.

\bibitem{DBLP:journals/corr/BahdanauCB14}
D.~Bahdanau, K.~Cho, and Y.~Bengio, ``Neural machine translation by jointly
  learning to align and translate,'' in \emph{Proceedings of ICLR}, San Diago,
  CA, 2015.

\bibitem{DBLP:journals/corr/JozefowiczVSSW16}
R.~J{\'{o}}zefowicz, O.~Vinyals, M.~Schuster, N.~Shazeer, and Y.~Wu,
  ``Exploring the limits of language modeling,'' \emph{Arxiv}, vol.
  abs/1602.02410, 2016.

\bibitem{DBLP:conf/cvpr/DeanRSSVY13}
T.~L. Dean, M.~A. Ruzon, M.~Segal, J.~Shlens, S.~Vijayanarasimhan, and
  J.~Yagnik, ``Fast, accurate detection of 100, 000 object classes on a single
  machine,'' in \emph{Proceedings of {CVPR}}, Portland, OR, 2013, pp.
  1814--1821.

\bibitem{DBLP:journals/corr/GraveJCGJ16}
E.~Grave, A.~Joulin, M.~Ciss{\'{e}}, D.~Grangier, and H.~J{\'{e}}gou,
  ``Efficient softmax approximation for {GPUs},'' in \emph{Proceedings of
  {ICML}}, Sydney, Australia, 2017, pp. 1302--1310.

\bibitem{DBLP:conf/iros/MaturanaS15}
D.~Maturana and S.~Scherer, ``Voxnet: {A} 3d convolutional neural network for
  real-time object recognition,'' in \emph{Proceedings of {IROS}}, Hamburg,
  Germany, 2015, pp. 922--928.

\bibitem{DBLP:conf/cvpr/RedmonDGF16}
J.~Redmon, S.~K. Divvala, R.~B. Girshick, and A.~Farhadi, ``You only look once:
  Unified, real-time object detection,'' in \emph{Proceedings of {CVPR}}, Las
  Vegas, NV, 2016, pp. 779--788.

\bibitem{DBLP:conf/cvpr/SzegedyVISW16}
C.~Szegedy, V.~Vanhoucke, S.~Ioffe, J.~Shlens, and Z.~Wojna, ``Rethinking the
  inception architecture for computer vision,'' in \emph{Proceddings of
  {CVPR}}, Las Vegas, NV, 2016, pp. 2818--2826.

\bibitem{DBLP:conf/acl/DevlinZHLSM14}
J.~Devlin, R.~Zbib, Z.~Huang, T.~Lamar, R.~M. Schwartz, and J.~Makhoul, ``Fast
  and robust neural network joint models for statistical machine translation,''
  in \emph{Proceedings of the 52nd Annual Meeting of the Association for
  Computational Linguistics, {ACL}}, Baltimore, MD, 2014, pp. 1370--1380.

\bibitem{DBLP:conf/icml/MnihT12}
A.~Mnih and Y.~W. Teh, ``A fast and simple algorithm for training neural
  probabilistic language models,'' in \emph{Proceedings {ICML}}, Edinburgh,
  Scotland, UK, 2012.

\bibitem{DBLP:conf/cvpr/WangYMHHX16}
J.~Wang, Y.~Yang, J.~Mao, Z.~Huang, C.~Huang, and W.~Xu, ``{CNN-RNN:} {A}
  unified framework for multi-label image classification,'' in
  \emph{Proceedings of {CVPR}}, Las Vegas, NV, 2016, pp. 2285--2294.

\bibitem{DBLP:conf/icip/HuangWWT13}
Y.~Huang, W.~Wang, L.~Wang, and T.~Tan, ``Multi-task deep neural network for
  multi-label learning,'' in \emph{Procedding of {IEEE} International
  Conference on Image Processing, {ICIP}}, Melbourne, Australia, 2013, pp.
  2897--2900.

\bibitem{DBLP:conf/aistats/MorinB05}
F.~Morin and Y.~Bengio, ``Hierarchical probabilistic neural network language
  model,'' in \emph{Proceedings of {AISTATS}}, Bridgetown, Barbados, 2005.

\bibitem{DBLP:conf/acl/ChenGA16}
W.~Chen, D.~Grangier, and M.~Auli, ``Strategies for training large vocabulary
  neural language models,'' in \emph{Proceedings of the 54th Annual Meeting of
  the Association for Computational Linguistics, {ACL}}, Berlin, Germany, 2016.

\bibitem{DBLP:journals/tnn/BengioS08}
Y.~Bengio and J.~Senecal, ``Adaptive importance sampling to accelerate training
  of a neural probabilistic language model,'' \emph{{IEEE} Trans. Neural
  Networks}, vol.~19, no.~4, pp. 713--722, 2008.

\bibitem{DBLP:conf/nips/MikolovSCCD13}
T.~Mikolov, I.~Sutskever, K.~Chen, G.~S. Corrado, and J.~Dean, ``Distributed
  representations of words and phrases and their compositionality,'' in
  \emph{Proceedings of {NIPS}}, Lake Tahoe, NV, 2013, pp. 3111--3119.

\bibitem{DBLP:journals/jmlr/GutmannH10}
M.~Gutmann and A.~Hyv{\"{a}}rinen, ``Noise-contrastive estimation: {A} new
  estimation principle for unnormalized statistical models,'' in
  \emph{Proceedings of {AISTATS}}, Chia Laguna Resort, Sardinia, Italy, 2010,
  pp. 297--304.

\bibitem{hinton1989connectionist}
G.~E. Hinton, ``Connectionist learning procedures,'' \emph{Artificial
  intelligence}, vol.~40, no.~1, pp. 185--234, 1989.

\bibitem{DBLP:conf/nips/KrizhevskySH12}
A.~Krizhevsky, I.~Sutskever, and G.~E. Hinton, ``Imagenet classification with
  deep convolutional neural networks,'' in \emph{Proceedings of {NIPS}}, Lake
  Tahoe, NV, 2012, pp. 1106--1114.

\bibitem{DBLP:journals/jmlr/CrammerS01}
K.~Crammer and Y.~Singer, ``On the algorithmic implementation of multiclass
  kernel-based vector machines,'' \emph{Journal of Machine Learning Research},
  vol.~2, pp. 265--292, 2001.

\bibitem{DBLP:conf/icml/SocherLNM11}
R.~Socher, C.~C. Lin, A.~Y. Ng, and C.~D. Manning, ``Parsing natural scenes and
  natural language with recursive neural networks,'' in \emph{Proceedings of
  {ICML}}, Bellevue, WA, 2011, pp. 129--136.

\bibitem{DBLP:journals/corr/JanochaC17}
K.~Janocha and W.~M. Czarnecki, ``On loss functions for deep neural networks in
  classification,'' \emph{Arxiv}, vol. abs/1702.05659, 2017.

\bibitem{DBLP:conf/naacl/AndreasK15}
J.~Andreas and D.~Klein, ``When and why are log-linear models
  self-normalizing?'' in \emph{Proceddings of {NAACL} {HLT} 2015, The 2015
  Conference of the North American Chapter of the Association for Computational
  Linguistics: Human Language Technologies}, Denver, CO, 2015, pp. 244--249.

\bibitem{lecun1998gradient}
Y.~LeCun, L.~Bottou, Y.~Bengio, and P.~Haffner, ``Gradient-based learning
  applied to document recognition,'' \emph{Proceedings of the IEEE}, vol.~86,
  no.~11, pp. 2278--2324, 1998.

\bibitem{netzer2011reading}
Y.~Netzer, T.~Wang, A.~Coates, A.~Bissacco, B.~Wu, and A.~Y. Ng, ``Reading
  digits in natural images with unsupervised feature learning,'' in \emph{NIPS
  workshop on deep learning and unsupervised feature learning}.\hskip 1em plus
  0.5em minus 0.4em\relax Granada, Spain, 2011.

\bibitem{krizhevsky2009learning}
A.~Krizhevsky and G.~Hinton, ``Learning multiple layers of features from tiny
  images,'' 2009.

\bibitem{DBLP:journals/ijcv/RussakovskyDSKS15}
O.~Russakovsky, J.~Deng, H.~Su, J.~Krause, S.~Satheesh, S.~Ma, Z.~Huang,
  A.~Karpathy, A.~Khosla, M.~S. Bernstein, A.~C. Berg, and F.~Li, ``Imagenet
  large scale visual recognition challenge,'' \emph{International Journal of
  Computer Vision}, vol. 115, no.~3, pp. 211--252, 2015.

\bibitem{lecun1989backpropagation}
Y.~LeCun, B.~Boser, J.~S. Denker, D.~Henderson, R.~E. Howard, W.~Hubbard, and
  L.~D. Jackel, ``Backpropagation applied to handwritten zip code
  recognition,'' \emph{Neural computation}, vol.~1, no.~4, pp. 541--551, 1989.

\bibitem{DBLP:conf/icml/IoffeS15}
S.~Ioffe and C.~Szegedy, ``Batch normalization: Accelerating deep network
  training by reducing internal covariate shift,'' in \emph{Proceedings of
  {ICML}}, Lille, France, 2015, pp. 448--456.

\bibitem{DBLP:journals/corr/SimonyanZ14a}
K.~Simonyan and A.~Zisserman, ``Very deep convolutional networks for
  large-scale image recognition,'' in \emph{Proceedings of ICLR}, San Diago,
  CA, 2015.

\bibitem{DBLP:conf/cvpr/HeZRS16}
K.~He, X.~Zhang, S.~Ren, and J.~Sun, ``Deep residual learning for image
  recognition,'' in \emph{Proceedings of {CVPR}}, Las Vegas, NV, 2016, pp.
  770--778.

\bibitem{keren2017tunable}
G.~Keren, S.~Sabato, and B.~Schuller, ``Tunable sensitivity to large errors in
  neural network training.'' in \emph{Proceedings of AAAI}, San Francisco, CA,
  2017, pp. 2087--2093.

\end{thebibliography}

\end{document}